\documentclass[runningheads]{llncs}
\usepackage[T1]{fontenc}
\usepackage{graphicx}
\usepackage{float}
\usepackage{amsmath, amssymb}
\usepackage{bm}
\usepackage{enumitem}
\usepackage{url}

\begin{document}
\title{Post-Hoc Understanding of Metaphor Processing in Decoder-Only Language Models via Conditional Scale Entropy\thanks{Research reported in this publication was supported by the National Institute of General Medical Sciences of the National Institutes of Health under Award Number P20GM104420.}}
\titlerunning{Post-Hoc Understanding of Metaphor Processing via CSE}
%
\author{
Lawhori Chakrabarti\inst{1} \and
Jennifer Johnson-Leung\inst{2} \and
Bert Baumgaertner\inst{3} \and
Aleksandar Vakanski \inst{4} \and
Min Xian \inst{5} \and
Boyu Zhang \inst{1}
}
\authorrunning{L. Chakrabarti et al.}
%
\institute{Department of Computer Science, University of Idaho, Moscow ID 83844, USA \and
Department of Mathematics and Statistical Science, University of Idaho, Moscow ID 83844, USA \and
Department of Politics and Philosophy, University of Idaho, Moscow ID 83844, USA \and
Department of Nuclear Engineering and Industrial Management, University of Idaho, Idaho Falls ID 83402, USA \and
Department of Computer Science, University of Idaho, Idaho Falls ID 83402, USA
}
\maketitle              
\begin{abstract}
Metaphor requires a language model to resolve a token whose
contextual meaning diverges from its basic literal sense. Understanding
how transformer models organize this reinterpretation across depth
remains an open problem in mechanistic interpretability. We introduce
conditional scale entropy (CSE), a wavelet-derived measure of how
broadly transformer computation engages across frequency scales at
each layer position. Two theorems establish that CSE is invariant to
update magnitude, isolating the structural pattern of updates from
their intensity. Using CSE, we find that metaphorical tokens produce
significantly higher spectral breadth than literal tokens at
contiguous layer positions on every decoder-only architecture tested,
from 124M to 20B parameters (GPT-2 family, LLaMA-2 7B, GPT-oss
20B). The effect survives cluster-based permutation correction,
recurs in the early-to-mid relative depth range across models, and
converges with an independent analysis of 200 naturalistic VUA
pairs. Specificity controls further show that the effect is not
explained by semantic complexity or by matched propositional content.
These results identify multi-scale coordination as a consistent
signature of metaphorical language processing in the decoder-only
architectures examined, and establish CSE as a principled tool for
characterizing cross-depth structure in transformers.

\keywords{Mechanistic interpretability \and Wavelet analysis \and
Metaphor processing \and Conditional scale entropy \and Transformer.}
\end{abstract}

\section{Introduction}
\label{sec:introduction}

Transformer language models process metaphorical language with
notable proficiency~\cite{wachowiak2023does}, yet the internal
mechanisms underlying this ability remain poorly understood. When a
model encounters ``The lawyer is a shark,'' the target token must be
reinterpreted from its literal referent to a figurative predicate.
We observe the input and the output; the intermediate computation
across the model's layers constitutes a black box. This paper asks a
specific question: does metaphorical processing organize the model's
internal computation differently from literal processing, and if so,
how can this difference be measured?

The residual trajectory~\cite{elhage2021mathematical} of a target token is a discrete sequence of hidden states indexed by layer depth. We construct a continuous interpolant from this sequence and apply the Continuous Wavelet Transform~\cite{mallat2009wavelet} (CWT) to obtain a scalogram that resolves energy at each layer position across frequency scales. From this scalogram we derive the conditional scale entropy (CSE), a quantity that measures how broadly computation engages across scales at each depth position. Two theorems establish that CSE is invariant to update magnitude and responds only to the structural pattern of updates across layers, separating how computation is organized from how intense it is.

Experiments across five architectures (124M to 20B parameters) and
two stimulus sets (25 controlled minimal pairs, 200 naturalistic VUA
pairs~\cite{steen2010method}) show that metaphorical tokens produce
significantly higher CSE at contiguous layer positions on every
model tested, surviving cluster-based permutation
correction~\cite{maris2007nonparametric}. This invariance is
important empirically: among the spectral quantities examined, CSE
is the only one that remains stable across architectures and
stimulus regimes. Specificity controls confirm that the elevation is
absent for literal sentences differing in semantic or syntactic
complexity, and a matched-meaning triples experiment further shows
that the effect tracks figurative mode of expression rather than
propositional content alone. The framework is mathematically
general; metaphor serves as the proving ground.

This paper makes three contributions: a wavelet-based
framework for analysing residual-stream trajectories across
transformer depth with the conditional scale entropy (CSE) as a
layer-resolved measure of spectral breadth; a theoretical
characterisation showing that CSE is invariant to update
magnitude and ordered by majorisation of the normalised scale
distribution; and empirical validation across five decoder-only
architectures, naturalistic VUA data, and targeted specificity
controls, where CSE proves the most stable spectral signature
of metaphorical processing.

Proofs of the theoretical results, the full controlled stimulus
list, and additional supporting analyses are deferred to an
extended version of this paper.

\section{Related Work}

\paragraph{Metaphor identification in NLP.}
Computational metaphor research has predominantly addressed
identification and interpretation rather than the internal
processing dynamics of neural language models. Early systems framed
the task as contextual sequence labelling
\cite{gao2018neural} or grounded it in selectional-preference
violation \cite{mao2019end}. Pretrained transformers
substantially advanced the state of the art (for instance, MelBERT
\cite{choi2021melbert} pairs contextualised late interaction with
the Metaphor Identification Procedure
\cite{steen2010method}), and more recent work asks whether
generative LLMs can recover conceptual metaphor mappings under
prompting~\cite{wachowiak2023does}. From cognitive linguistics,
the Graded Salience Hypothesis \cite{giora1997graded} and the
Career of Metaphor theory \cite{bowdle2005career} predict that
novel metaphors demand structural alignment between source and
target domains, whereas conventional metaphors are resolved by
categorisation. These predictions motivate a computational measure
of processing structure that operates inside the model rather
than at the input--output level.

\paragraph{Transformer representations across depth.}
A complementary line of work asks how linguistic structure is
organised within transformer models: BERT recapitulates the
classical NLP pipeline across depth~\cite{tenney2019bert};
contextualised representations grow increasingly context-specific
with depth~\cite{ethayarajh2019contextual}; metaphorical knowledge
is recoverable from pretrained representations with the strongest
probing signal at middle layers~\cite{aghazadeh2022metaphors};
and probe accuracy alone is insufficient for representational
claims~\cite{voita2020information}. Mechanistic accounts shift
the question from what a layer encodes to how it
transforms the residual stream
\cite{elhage2021mathematical,geva2022ffn}, and Geshkovski et
al.~\cite{geshkovski2025mathematical} treat depth as continuous
time, motivating the analysis of the hidden-state sequence as a
discrete signal indexed by depth.

\paragraph{Spectral analysis of transformers.}
Frequency-domain methods have illuminated deep-network behaviour:
spectral bias in training~\cite{rahaman2019spectral},
information-bottleneck dynamics across layers~\cite{shwartzziv2017opening},
wavelet ablations of vision transformer attention
maps~\cite{abraham2025wavelet}, and attention entropy collapse as a
deep-transformer failure mode~\cite{zhai2023stabilizing}. However,
wavelet decomposition has not previously been applied post hoc to
hidden-state trajectories of language models. The present work fills
this gap by applying the CWT to projected residual-stream
trajectories and deriving the conditional scale entropy.

\section{Methodology}
\label{sec:methodology}
This section presents our framework for analysing the spectral
structure of transformer computation. The primary tool is the
Continuous Wavelet Transform (CWT) applied to hidden-state
trajectories across layers; the application to metaphor processing
is described in Section~\ref{sec:experiments}.

\subsection{Problem Setting and Notation}
\label{sec:problem}

Consider a transformer language model with $L$ layers and embedding
dimension $d$. For a given input sentence, each token $\tau$ is
associated with a sequence of hidden-state vectors
$x_0^\tau, x_1^\tau, \ldots, x_L^\tau \in \mathbb{R}^d$, where
$x_0^\tau$ is the initial embedding and $x_L^\tau$ is the final-layer
representation. The residual connection \cite{vaswani2017attention}
yields the recurrence
\begin{equation}\label{eq:residual}
x_{l+1}^\tau = x_l^\tau + \Delta_l(x_l^\tau),
\end{equation}
where $\Delta_l = \mathrm{Attn}_l + \mathrm{MLP}_l$ is the combined
attention and feed-forward update at layer $l$. Following Elhage et
al.\ \cite{elhage2021mathematical}, we refer to the sequence
$(x_0^\tau, \ldots, x_L^\tau)$ as the \emph{residual trajectory} of
token $\tau$.

Our analysis is built on a \emph{minimal-pair} design. Given a target
lexeme $w$ (e.g., \emph{crushed}), we embed it in two sentences that
differ only in whether $w$ is used literally (\emph{The weight crushed
the box.}) or metaphorically (\emph{The news crushed her hopes.}).
The object of study is the pair of residual trajectories
$(x_0^{\mathrm{lit}}, \ldots, x_L^{\mathrm{lit}})$ and
$(x_0^{\mathrm{met}}, \ldots, x_L^{\mathrm{met}})$ for the shared
target token. We ask whether the spectral structure of their
respective layer-to-layer updates differs in a systematic and
measurable way.
\subsection{Projected Trajectory and the Contrast Direction}
\label{sec:projection}

The residual updates $\Delta_l^\tau \in \mathbb{R}^d$ are
high-dimensional vectors. Comparing them directly across conditions
requires reducing them to a representation that preserves the
metaphor--literal distinction while enabling frequency analysis. We
achieve this by projecting onto a single direction in $\mathbb{R}^d$.

Given $K$ minimal pairs, each contributing literal updates
$\{\Delta_l^{\mathrm{lit},k}\}_{l=0}^{L-1}$ and metaphorical updates
$\{\Delta_l^{\mathrm{met},k}\}_{l=0}^{L-1}$, we define the
\emph{normalized mean-difference direction}
\begin{equation}\label{eq:vstar}
v^* = \frac{\displaystyle\sum_{k=1}^{K}\sum_{l=0}^{L-1}
  \bigl(\Delta_l^{\mathrm{met},k} -
  \Delta_l^{\mathrm{lit},k}\bigr)}
{\displaystyle\bigl\|\sum_{k=1}^{K}\sum_{l=0}^{L-1}
  \bigl(\Delta_l^{\mathrm{met},k} -
  \Delta_l^{\mathrm{lit},k}\bigr)\bigr\|_2}.
\end{equation}
This is the unit vector along the aggregate update difference between
metaphorical and literal processing, summed across all pairs and
layers. The \emph{projected trajectory} is $s(l) = v^{*\top}x_l^\tau$
and the \emph{projected updates} are
$\delta_l = v^{*\top}\Delta_l$, collected into the vector
$\boldsymbol{\delta} \in \mathbb{R}^L$, yielding a scalar signal
indexed by layer depth that can be analysed with standard signal
processing tools. A leave-one-out test confirms that $v^*$ captures
a stable structural property: in every experiment, holding out a
pair and re-estimating $v^*$ from the remainder still separates
the held-out pair with the correct sign. Because $v^*$ is defined
relative to a particular set of pairs, we also test sensitivity by
re-estimating it from the VUA data in Section~\ref{sec:vua}.

Two quantities follow from the projection. \emph{Projection separation} ($\bar{\delta}_k^{\mathrm{met}} - \bar{\delta}_k^{\mathrm{lit}}$) is a first-moment check that $v^*$ captures a meaningful contrast. The conditional scale entropy (Section~\ref{sec:cse}) is a second-order structural quantity: two update vectors with identical mean projected differences can produce different scale entropy if their updates are organised differently across layers, so it asks whether the contrast reflects single-layer reconfiguration or multi-layer coordination.

\subsection{Wavelet Decomposition of the Projected Trajectory}
\label{sec:wavelet_decomp}

The projected trajectory $s(l)$ is a discrete signal of $L{+}1$
samples. We want to characterize its frequency content not globally
but at each layer position: at layer $b$, which frequency
scales carry energy? The Discrete Wavelet Transform decomposes a
signal into octave-spaced subbands but does not localize energy along
the signal's length, whereas the Continuous Wavelet Transform (CWT)
produces a two-dimensional scalogram that maps energy as a function
of both frequency scale $a$ and position $b$
\cite{mallat2009wavelet}, which is what the conditional scale entropy
(Section~\ref{sec:cse}) requires. We therefore use the CWT.

We embed the discrete signal $s(l)$ as a continuous function
$\tilde{s}(t)$ via piecewise-linear interpolation on $[0, L]$,
extended by constants $s(0)$ and $s(L)$ outside this interval, a
standard construction that avoids artificial discontinuities
\cite{Torrence1998APG}. We compute the CWT using the real Morlet
wavelet
\begin{equation}\label{eq:morlet}
\psi(t) = e^{-t^2/2}\cos(\omega_0 t), \qquad \omega_0 = 5,
\end{equation}
implemented by PyWavelets (\texttt{pywt.cwt} with \texttt{"morl"}).
The Morlet wavelet provides good joint time--frequency localization
and approximately annihilates constants
($|\hat{\psi}(0)| \approx 9.3 \times 10^{-6}$, verified numerically).
The CWT at scale $a > 0$ and position $b \in \mathbb{R}$ is
\begin{equation}\label{eq:cwt}
W_\psi \tilde{s}(a, b) = \frac{1}{\sqrt{a}}
\int_{-\infty}^{\infty}\tilde{s}(t)\,
\psi\!\left(\frac{t-b}{a}\right)dt,
\end{equation}
and its squared modulus $|W_\psi\tilde{s}(a,b)|^2$ is the
\emph{scalogram}. We evaluate the scalogram at $S$ integer scales $a_j = j$ for $j = 1, \ldots, S$, with $S = L+1$. The smallest scale $a_1 = 1$
captures features that span a single layer; the largest scale
$a_S = L+1$ matches the length of the trajectory itself. Scales
larger than this would extend beyond the signal and fall outside
the trustworthy region defined by the cone of influence~\cite{Torrence1998APG}.

Because $\tilde{s}$ is piecewise linear, the scalogram at any fixed
position $b_0$ is a linear function of the update vector
$\boldsymbol{\delta}$. Specifically, there exists a wavelet response
operator $\Phi_{b_0}\in\mathbb{R}^{S\times L}$, determined entirely
by the wavelet and layer geometry, such that
\begin{equation}\label{eq:operator}
\mathbf{z} \;=\; \Phi_{b_0}\,\boldsymbol{\delta} \in \mathbb{R}^S,
\qquad
|W_\psi\tilde{s}(a_j, b_0)|^2 \approx z_j^{\,2}.
\end{equation}
A useful consequence of this decomposition: when only one layer has a nonzero update, the normalized scale distribution $z_j^2/\|\mathbf{z}\|^2$ is independent of that update's magnitude, so any observed between-condition entropy difference reflects multi-layer interaction structure, not the magnitude of any single update.

\subsection{Conditional Scale Entropy}
\label{sec:cse}

The scalogram $|W_\psi\tilde{s}(a_j, b_k)|^2$ assigns an energy
value to each combination of scale $a_j$ and position $b_k$. At a
fixed position $b_k$, normalizing across scales produces a probability
distribution over frequency scales:
\begin{equation}\label{eq:csd}
p(a_j \mid b_k) = \frac{z_j^2}{\|\mathbf{z}\|^2},
\end{equation}
where $\mathbf{z} = \Phi_{b_k}\boldsymbol{\delta}$ as in
\eqref{eq:operator}. The \emph{conditional scale entropy} is the
Shannon entropy of this distribution:
\begin{equation}\label{eq:cse}
H(\mathrm{scale} \mid b_k) =
-\sum_{j=1}^{S} p(a_j \mid b_k)\,\log\,p(a_j \mid b_k).
\end{equation}
This quantity measures the breadth of frequency scales engaged by the
model's computation at layer position $b_k$. High values indicate that
energy is distributed across many scales simultaneously; low values
indicate concentration at a single scale. The structure of this
quantity is characterized by the following result.

\begin{theorem}[Structure of the Conditional Scale Entropy]
\label{thm:structure}
Let $\boldsymbol{\delta} \in \mathbb{R}^L$ and
$\mathbf{z} = \Phi_{b_0}\boldsymbol{\delta} \in \mathbb{R}^S$. Then:
\begin{enumerate}[label=(\roman*)]
\item $p(a_j \mid b_0) = z_j^2 / \|\mathbf{z}\|^2$.
\item $H(\mathrm{scale} \mid b_0) = \log S$ if and only if
$|z_1| = \cdots = |z_S|$.
\item $H(\mathrm{scale} \mid b_0) = 0$ if and only if exactly one
$z_j \neq 0$.
\item $H(\mathrm{scale} \mid b_0)$ is invariant under
$\boldsymbol{\delta} \mapsto c\boldsymbol{\delta}$ for any
$c \neq 0$.
\end{enumerate}
\end{theorem}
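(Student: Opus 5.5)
The plan is to reduce everything to elementary facts about the Shannon entropy of a finite probability vector, once we note that the distribution is well defined. Throughout I assume $\mathbf{z}\neq 0$, which is implicit in the statement since otherwise $p(a_j\mid b_0)$ is undefined; in particular this forces $\boldsymbol{\delta}\neq 0$. Part (i) is then immediate from the definition \eqref{eq:csd} together with the linear representation \eqref{eq:operator}: at $b_0$ the scale distribution is by definition $z_j^2/\|\mathbf{z}\|^2$ with $\mathbf{z}=\Phi_{b_0}\boldsymbol{\delta}$. It is nonnegative and sums to one because $\sum_j z_j^2=\|\mathbf{z}\|^2$, so it is a genuine probability vector on $S$ points. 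I will treat \eqref{eq:operator} as the exact definition of $\mathbf{z}$, so the ``$\approx$'' there plays no role in the proof.

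For (ii), I would use the standard bound $H(p)\le\log S$ for a distribution on $S$ outcomes, with equality if and only if $p$ is uniform. The cleanest proof applies Jensen's inequality to the strictly concave $\log$:
\[
H(p)=\sum_{p_j>0}p_j\log(1/p_j)\le\log\sum_{p_j>0}1\le\log S .
\]
Equality in the first step requires $1/p_j$ to be constant on the support. Equality in the second requires full support. Together these say $p_j=1/S$ for all $j$, which, since $p_j=z_j^2/\|\mathbf{z}\|^2$, is equivalent to $z_j^2=\|\mathbf{z}\|^2/S$ for all $j$, i.e.\ $|z_1|=\cdots=|z_S|$. Conversely, equal moduli give the uniform distribution and hence entropy $\log S$.

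For (iii), use the convention $0\log 0=0$, so each term $-p_j\log p_j$ is nonnegative and vanishes only when $p_j\in\{0,1\}$. Hence $H=0$ if and only if every $p_j\in\{0,1\}$. Since the $p_j$ sum to one, this means exactly one $p_j=1$, which is equivalent to exactly one $z_j\neq0$.

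For (iv), linearity of $\Phi_{b_0}$ gives $\Phi_{b_0}(c\boldsymbol{\delta})=c\mathbf{z}$, which is still nonzero for $c\neq0$. Then $(cz_j)^2/\|c\mathbf{z}\|^2=z_j^2/\|\mathbf{z}\|^2$, so the distribution, and therefore its entropy, is unchanged.

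The only delicate points are bookkeeping. The first is the $\mathbf{z}\neq0$ caveat, which I would state explicitly as a standing assumption. The second is the equality case in (ii), where one must handle zero entries; restricting the Jensen argument to the support and then demanding full support, as above, handles this.
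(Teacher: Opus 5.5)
Your proof is correct and follows the paper's argument part by part. In (i) the paper makes the same idealization you do, treating \eqref{eq:operator} as exact by setting $\hat{\psi}(0)=0$ so that the $s(0)\sqrt{a_j}\,\hat{\psi}(0)$ term drops, and in (iv) it uses the same linearity argument. The remaining differences are minor: you prove the equality cases of $0\le H\le\log S$ directly (Jensen restricted to the support, and termwise nonnegativity) where the paper cites Cover--Thomas, and you state explicitly the assumption $\mathbf{z}\neq 0$, which the paper leaves implicit apart from taking $\alpha>0$ in (ii).
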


\begin{proof}

\noindent\textbf{Part (i).}\quad
By Definition~5.1, the conditional scale distribution at
position $b_0$ is
\begin{equation}\tag{A.1}
p(a_j \mid b_0)
\;=\;
\frac{|W_\psi\tilde{s}(a_j,\,b_0)|^2}
     {\displaystyle\sum_{j'=1}^{S}
      |W_\psi\tilde{s}(a_{j'},\,b_0)|^2}.
\end{equation}
From the wavelet response decomposition
(Equation~\eqref{eq:operator}),
\begin{equation}\tag{A.2}
W_\psi\tilde{s}(a_j,\,b_0)
\;=\;
\sum_{l=0}^{L-1}\delta_l\,\phi_l(a_j,\,b_0)
\;+\;
s(0)\sqrt{a_j}\,\hat{\psi}(0).
\end{equation}
The second term is the residual from constant annihilation.
For the real Morlet wavelet with $\omega_0 = 5$, we have
$|\hat{\psi}(0)| \approx 9.3 \times 10^{-6}$
(Section~\ref{sec:wavelet_decomp}). Working in the algebraic
formulation where constant annihilation is exact
($\hat{\psi}(0) = 0$), Equation~(A.2) reduces to
\begin{equation}\tag{A.3}
W_\psi\tilde{s}(a_j,\,b_0) \;=\; z_j,
\end{equation}
where $z_j = (\Phi_{b_0}\boldsymbol{\delta})_j$ is the $j$-th
component of $\mathbf{z}$. Since the real Morlet wavelet produces
real-valued coefficients, $|W_\psi\tilde{s}(a_j,\,b_0)|^2 = z_j^2$.
Substituting into (A.1):
\begin{equation}\tag{A.4}
p(a_j \mid b_0)
\;=\;
\frac{z_j^2}{\displaystyle\sum_{j'=1}^{S} z_{j'}^2}
\;=\;
\frac{z_j^2}{\|\mathbf{z}\|^2}.
\end{equation}

\medskip
\noindent\textbf{Part (ii).}\quad
From part~(i), define $p_j := z_j^2 / \|\mathbf{z}\|^2$.
This is a probability distribution on $S$ outcomes:
$p_j \geq 0$ for all $j$, and
\begin{equation}\tag{A.5}
\sum_{j=1}^{S} p_j
\;=\;
\frac{1}{\|\mathbf{z}\|^2}\sum_{j=1}^{S} z_j^2
\;=\;
\frac{\|\mathbf{z}\|^2}{\|\mathbf{z}\|^2}
\;=\; 1.
\end{equation}
The conditional scale entropy is
$H(\mathrm{scale} \mid b_0)
= -\sum_j p_j \log p_j = H(\mathbf{p})$.

\smallskip\noindent
$(\Longleftarrow)$\;
Suppose $|z_1| = |z_2| = \cdots = |z_S| =: \alpha$
for some $\alpha > 0$. Then $z_j^2 = \alpha^2$ for all $j$,
$\|\mathbf{z}\|^2 = S\alpha^2$, and
\begin{equation}\tag{A.6}
p_j \;=\; \frac{\alpha^2}{S\alpha^2} \;=\; \frac{1}{S}
\quad\text{for all } j.
\end{equation}
Therefore
\begin{equation}\tag{A.7}
H(\mathbf{p})
\;=\;
-\sum_{j=1}^{S}\frac{1}{S}\log\frac{1}{S}
\;=\;
-S \cdot \frac{1}{S} \cdot \log\frac{1}{S}
\;=\; \log S.
\end{equation}

\smallskip\noindent
$(\Longrightarrow)$\;
Suppose $H(\mathbf{p}) = \log S$.
By the maximum entropy theorem
\cite[Theorem~2.6.4]{cover2006elements},
$H(\mathbf{p}) \leq \log S$ for any distribution on $S$
outcomes, with equality if and only if $\mathbf{p}$ is
uniform. Therefore $p_j = 1/S$ for all $j$.
Substituting $p_j = z_j^2/\|\mathbf{z}\|^2 = 1/S$:
\begin{equation}\tag{A.8}
z_j^2 \;=\; \frac{\|\mathbf{z}\|^2}{S}
\quad\text{for all } j.
\end{equation}
Taking square roots:
$|z_j| = \|\mathbf{z}\|/\sqrt{S}$ for all $j$,
hence $|z_1| = \cdots = |z_S|$.

\medskip
\noindent\textbf{Part (iii).}\quad

\smallskip\noindent
$(\Longleftarrow)$\;
Suppose exactly one index $j_0$ has $z_{j_0} \neq 0$
and $z_j = 0$ for all $j \neq j_0$. Then
$\|\mathbf{z}\|^2 = z_{j_0}^2$, so
\begin{equation}\tag{A.9}
p_{j_0} \;=\; \frac{z_{j_0}^2}{z_{j_0}^2} \;=\; 1,
\qquad
p_j \;=\; \frac{0}{z_{j_0}^2} \;=\; 0
\quad\text{for } j \neq j_0.
\end{equation}
Therefore
$H(\mathbf{p})
= -1 \cdot \log 1
  - \sum_{j \neq j_0} 0 \cdot \log 0
= 0$,
where $0 \log 0 := 0$ by convention.

\smallskip\noindent
$(\Longrightarrow)$\;
Suppose $H(\mathbf{p}) = 0$.
Since $H(\mathbf{p}) \geq 0$ with equality if and only if
$\mathbf{p}$ is a point mass
\cite[Theorem~2.6.4]{cover2006elements},
there exists $j_0$ such that $p_{j_0} = 1$ and
$p_j = 0$ for $j \neq j_0$. Now:
\begin{align}
p_j = 0
&\;\Longrightarrow\;
z_j^2/\|\mathbf{z}\|^2 = 0
\;\Longrightarrow\;
z_j = 0,
\tag{A.10}\\[4pt]
p_{j_0} = 1
&\;\Longrightarrow\;
z_{j_0}^2 = \|\mathbf{z}\|^2
= \sum_{j=1}^{S} z_j^2.
\tag{A.11}
\end{align}
Subtracting $z_{j_0}^2$ from both sides of (A.11):
$\sum_{j \neq j_0} z_j^2 = 0$, confirming
$z_j = 0$ for all $j \neq j_0$.
Hence exactly one $z_j \neq 0$.

\medskip
\noindent\textbf{Part (iv).}\quad
Let $\boldsymbol{\delta}' = c\boldsymbol{\delta}$ for
$c \neq 0$. Define
$\mathbf{z}' = \Phi_{b_0}\boldsymbol{\delta}'$. By linearity:
\begin{equation}\tag{A.12}
\mathbf{z}'
\;=\;
\Phi_{b_0}(c\boldsymbol{\delta})
\;=\;
c\,\Phi_{b_0}\boldsymbol{\delta}
\;=\;
c\,\mathbf{z}.
\end{equation}
Therefore $z_j' = c\,z_j$ for every $j$. The new
probabilities are:
\begin{equation}\tag{A.13}
p_j'
\;=\;
\frac{(z_j')^2}{\|\mathbf{z}'\|^2}
\;=\;
\frac{(c\,z_j)^2}{\|c\,\mathbf{z}\|^2}
\;=\;
\frac{c^2\,z_j^2}{c^2\,\|\mathbf{z}\|^2}
\;=\;
\frac{z_j^2}{\|\mathbf{z}\|^2}
\;=\;
p_j.
\end{equation}
Since $p_j' = p_j$ for all $j$, the distribution is
unchanged, hence
$H(\mathrm{scale} \mid b_0)$ computed from
$\boldsymbol{\delta}'$ equals
$H(\mathrm{scale} \mid b_0)$ computed from
$\boldsymbol{\delta}$.
\end{proof}

Property~(iv) is central: CSE is insensitive to the overall magnitude
of $\boldsymbol{\delta}$ and responds only to the pattern of
updates across layers. Any observed entropy difference between two
conditions must therefore reflect multi-layer interaction structure,
not magnitude at any individual layer. A sufficient structural
condition for entropy ordering is provided by majorization.

\begin{theorem}[Majorization Implies Entropy Ordering]
\label{thm:major}
Let $\hat{z}_j^A = (z_j^A)^2/\|\mathbf{z}^A\|^2$ and
$\hat{z}_j^B = (z_j^B)^2/\|\mathbf{z}^B\|^2$. If
$\hat{\mathbf{z}}^A \prec \hat{\mathbf{z}}^B$ and
$\hat{\mathbf{z}}^A$ is not a permutation of
$\hat{\mathbf{z}}^B$, then
$H(\mathrm{scale}\mid b_0)^A > H(\mathrm{scale}\mid b_0)^B$.
\end{theorem}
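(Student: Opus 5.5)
By Theorem~\ref{thm:structure}(i), the conditional scale entropy at $b_0$ is exactly the Shannon entropy of the normalized vector: $H(\mathrm{scale}\mid b_0)^A = H(\hat{\mathbf{z}}^A)$ and $H(\mathrm{scale}\mid b_0)^B = H(\hat{\mathbf{z}}^B)$. Both vectors lie in the probability simplex of $\mathbb{R}^S$, since they are nonnegative and sum to one, as in (A.5). The theorem therefore reduces to the classical fact that Shannon entropy is \emph{strictly} Schur-concave on the simplex. I would prove this directly rather than only cite it, because the strictness (the ``not a permutation'' clause) is where the content lies.

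\textbf{Step 1 (Hardy--Littlewood--P\'olya).} Since $\hat{\mathbf{z}}^A \prec \hat{\mathbf{z}}^B$, there is a doubly stochastic matrix $D$ with $\hat{\mathbf{z}}^A = D\hat{\mathbf{z}}^B$. Equivalently, by Birkhoff's theorem, $\hat{\mathbf{z}}^A = \sum_\pi \lambda_\pi P_\pi \hat{\mathbf{z}}^B$ is a convex combination of permutations of $\hat{\mathbf{z}}^B$.

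\textbf{Step 2 (weak inequality).} The map $f(x) = -x\log x$, with $0\log 0 := 0$, is continuous and strictly concave on $[0,1]$. Write $H(\mathbf{p}) = \sum_j f(p_j)$, so $H$ is concave and permutation invariant. Jensen's inequality applied to the convex combination gives
\[
H(\hat{\mathbf{z}}^A) \ge \sum_\pi \lambda_\pi H(P_\pi\hat{\mathbf{z}}^B) = H(\hat{\mathbf{z}}^B).
\]

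\textbf{Step 3 (strictness, the main obstacle).} Jensen on the convex combination of permutations gives strictness only if $H$ is strictly concave along the relevant segments. $H$ is in fact strictly concave on the simplex because $f$ is strictly concave coordinatewise. Hence equality in Step 2 forces every permuted vector $P_\pi\hat{\mathbf{z}}^B$ with $\lambda_\pi>0$ to be equal. Their average $\hat{\mathbf{z}}^A$ would then itself be a permutation of $\hat{\mathbf{z}}^B$, contradicting the hypothesis.

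To make this airtight, I would instead use the row form. Let $\hat{z}^A_i = \sum_j D_{ij}\hat{z}^B_j$. Apply Jensen to $f$ row by row, using that each row of $D$ is a probability vector, and then sum using that the columns of $D$ also sum to one. This gives $\sum_i f(\hat{z}^A_i) \ge \sum_i\sum_j D_{ij} f(\hat{z}^B_j) = \sum_j f(\hat{z}^B_j)$. By strict concavity of $f$, equality in row $i$ holds iff all $\hat{z}^B_j$ with $D_{ij}>0$ are equal. If equality holds in every row, then each $\hat{z}^A_i$ equals a common value $\hat{z}^B_{j}$ taken over the support of row $i$. Then $D$ maps the multiset of values of $\hat{\mathbf{z}}^B$ onto itself, so $\hat{\mathbf{z}}^A$ is a permutation of $\hat{\mathbf{z}}^B$; the cleanest way to verify this is to check that level sets are preserved, using double stochasticity. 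This contradicts the hypothesis, so at least one row is strict and $H(\mathrm{scale}\mid b_0)^A > H(\mathrm{scale}\mid b_0)^B$.

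Alternatively, I could cite the standard result (Marshall--Olkin) that a symmetric strictly concave function is strictly Schur-concave. Part~(iv) of Theorem~\ref{thm:structure} is not needed here, but it explains why comparing normalized vectors is legitimate.
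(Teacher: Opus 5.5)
Your proposal is correct and takes the same route as the paper: reduce the CSE to the Shannon entropy of $\hat{\mathbf{z}}^A$ and $\hat{\mathbf{z}}^B$ on the simplex via Theorem~\ref{thm:structure}(i), then invoke strict Schur-concavity of entropy, which the paper simply cites from Marshall--Olkin whereas you prove it via Hardy--Littlewood--P\'olya, Birkhoff, and Jensen. Your first strictness argument is already airtight: equality in Jensen for the strictly concave, symmetric $H$ forces every $P_\pi\hat{\mathbf{z}}^B$ with $\lambda_\pi>0$ to coincide with $\hat{\mathbf{z}}^A$, so the row-form detour is not needed, though it also works once you use the row and column sums of $D$ to count that each level set of $\hat{\mathbf{z}}^B$ receives exactly as many rows as it has elements.
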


\begin{proof}

\noindent\textbf{Step 1: Reduction to Shannon entropy.}\quad
By Theorem~\ref{thm:structure}(i), the conditional scale
distributions for signals $A$ and $B$ are
\begin{equation}\tag{A.14}
p_j^A
\;=\;
\frac{(z_j^A)^2}{\|\mathbf{z}^A\|^2}
\;=\;
\hat{z}_j^A,
\qquad
p_j^B
\;=\;
\frac{(z_j^B)^2}{\|\mathbf{z}^B\|^2}
\;=\;
\hat{z}_j^B.
\end{equation}
Both are valid probability distributions on $S$ outcomes:
$p_j^A \geq 0$ for all $j$ (being a ratio of squares),
and
$\sum_j p_j^A
= \sum_j (z_j^A)^2/\|\mathbf{z}^A\|^2
= \|\mathbf{z}^A\|^2/\|\mathbf{z}^A\|^2
= 1$;
likewise for $\mathbf{p}^B$.
The conditional scale entropies are therefore
\begin{equation}\tag{A.15}
H(\mathrm{scale} \mid b_0)^A = H(\hat{\mathbf{z}}^A),
\qquad
H(\mathrm{scale} \mid b_0)^B = H(\hat{\mathbf{z}}^B),
\end{equation}
where $H$ denotes Shannon entropy.

\medskip
\noindent\textbf{Step 2: Majorization.}\quad
For vectors $\mathbf{p},\,\mathbf{q} \in \mathbb{R}^S$ with
entries sorted in decreasing order
$p_{[1]} \geq p_{[2]} \geq \cdots \geq p_{[S]}$,
we write $\mathbf{p} \prec \mathbf{q}$
($\mathbf{p}$ is majorized by $\mathbf{q}$) if
\begin{equation}\tag{A.16}
\sum_{j=1}^{k} p_{[j]}
\;\leq\;
\sum_{j=1}^{k} q_{[j]}
\qquad
\text{for all } k = 1, \ldots, S{-}1,
\end{equation}
with equality at $k = S$
\cite[Chapter~1]{marshall2011inequalities}.

\medskip
\noindent\textbf{Step 3: Strict Schur-concavity of Shannon
entropy.}\quad
A function $f\colon \mathbb{R}^S \to \mathbb{R}$ is
\emph{strictly Schur-concave} if
$\mathbf{p} \prec \mathbf{q}$ and $\mathbf{p}$ is not a
permutation of $\mathbf{q}$ implies
$f(\mathbf{p}) > f(\mathbf{q})$.
Shannon entropy
$H(\mathbf{p}) = -\sum_{j} p_j \log p_j$
is strictly Schur-concave on the probability simplex
\cite[Chapter~3]{marshall2011inequalities}.

\medskip
\noindent\textbf{Step 4: Conclusion.}\quad
We are given $\hat{\mathbf{z}}^A \prec \hat{\mathbf{z}}^B$
and $\hat{\mathbf{z}}^A$ is not a permutation of
$\hat{\mathbf{z}}^B$. Applying Step~3 with
$\mathbf{p} = \hat{\mathbf{z}}^A$ and
$\mathbf{q} = \hat{\mathbf{z}}^B$:
\begin{equation}\tag{A.17}
H(\hat{\mathbf{z}}^A) \;>\; H(\hat{\mathbf{z}}^B).
\end{equation}
Substituting from (A.15):
\begin{equation}\tag{A.18}
H(\mathrm{scale} \mid b_0)^A
\;>\;
H(\mathrm{scale} \mid b_0)^B.
\end{equation}
\end{proof}

\medskip
\noindent\textbf{Remark} (one-directional implication).
The converse does not hold: entropy ordering does not
imply majorization. Consider
$\mathbf{p} = (0.5,\, 0.25,\, 0.25)$ and
$\mathbf{q} = (0.4,\, 0.4,\, 0.2)$. Direct computation
gives $H(\mathbf{q}) = 1.522$ bits $> 1.500$ bits $=
H(\mathbf{p})$. Yet the distributions are incomparable
under majorization:
$p_{[1]} = 0.5 > 0.4 = q_{[1]}$ violates the first
partial-sum condition for $\mathbf{p} \prec \mathbf{q}$,
while $q_{[1]} + q_{[2]} = 0.8 > 0.75 = p_{[1]} + p_{[2]}$
violates the condition for
$\mathbf{q} \prec \mathbf{p}$.
The significance of Theorem~\ref{thm:major} is that when
majorization does hold, the entropy ordering is guaranteed
under every Schur-concave function --- including
R\'{e}nyi entropies of all orders and the participation
ratio --- not only under Shannon entropy.

\subsection{The Multi-Scale Engagement Hypothesis}
\label{sec:hypothesis}

We now state the empirical prediction that
Sections~\ref{sec:cross_arch} and~\ref{sec:vua} test.

\paragraph{Multi-Scale Engagement Hypothesis.}
Metaphorical processing produces higher conditional scale entropy
than literal processing at layer positions where the model actively
reconfigures the target token's representation:
$H(\mathrm{scale} \mid b)^{\mathrm{met}} > H(\mathrm{scale} \mid b)^{\mathrm{lit}}.$

\noindent By Theorem~\ref{thm:structure}(iv) this inequality concerns
the pattern of updates across layers, not their magnitude;
Theorem~\ref{thm:major} provides a sufficient structural condition
via majorisation.

Since the reconfiguration depth is model-dependent, we identify it empirically. On the 25 controlled minimal pairs (Section~\ref{sec:cross_arch}) we locate the contiguous range of positions where CSE elevation is significant under cluster-based permutation correction, and call this range the \emph{active zone} $\mathcal{A}_M$. We test this zone with two independent replications: (i) 200 naturalistic VUA pairs (Section~\ref{sec:vua}) with $v^*$ re-estimated from scratch on the VUA data, and (ii) four additional decoder-only architectures spanning 124M to 20B parameters, each with its own $v^*$ and identical statistical criteria.

\section{Experiments and Results}
\label{sec:experiments}

This section evaluates the Multi-Scale Engagement Hypothesis across
five transformer architectures and two stimulus sets.

\subsection{Experimental Design}
\label{sec:design}

\paragraph{Models.}
We evaluate five decoder-only transformer architectures spanning two
orders of magnitude in parameter count (Table~\ref{tab:models}):
GPT-2 Small/Medium/Large \cite{radford2019language} within a single family for controlled
scaling, LLaMA-2 7B \cite{touvron2023llama2} as a generalisation test, and GPT-oss 20B \cite{openai2025gptoss} as a
Mixture-of-Experts model with 32 experts per layer. Hidden states
are extracted from pretrained models without fine-tuning. All CWT
computations use PyWavelets (\texttt{pywt.cwt} with \texttt{"morl"})
at $S = L{+}1$ integer scales.

\begin{table}[t]
\centering
\caption{Models used in the evaluation.}
\label{tab:models}
\scriptsize
\setlength{\tabcolsep}{4pt}
\begin{tabular}{lccclc}
\hline
\textbf{Model} & \textbf{Params} & \textbf{Layers} &
\textbf{Hidden} & \textbf{Architecture} & \textbf{Precision} \\
\hline
GPT-2 Small  & 124M & 12 & 768  & Dense decoder-only & FP32  \\
GPT-2 Medium & 355M & 24 & 1024 & Dense decoder-only & FP32  \\
GPT-2 Large  & 774M & 36 & 1280 & Dense decoder-only & FP16  \\
LLaMA-2 7B   & 7B   & 32 & 4096 & Dense decoder-only & FP16  \\
GPT-oss 20B  & 20B  & 24 & 2880 & MoE (32 experts)   & MXFP4 \\
\hline
\end{tabular}
\end{table}

\paragraph{Stimulus sets.}
The first stimulus set consists of 25 controlled minimal pairs in
which a target lexeme appears once in a literal sentence and once in
a metaphorical sentence (e.g., \emph{The weight crushed the box} /
\emph{The news crushed her hopes}); each pair involves a concrete
physical verb or noun used in a cross-domain mapping between a
physical source and an abstract target. The shared lexeme ensures
that the initial token embedding is near-identical across
conditions, so differences in the residual trajectory arise from
contextual processing.

The second stimulus set is drawn from the VUA All POS
corpus~\cite{steen2010method}, the standard benchmark for metaphor
identification, providing token-level metaphor annotations on
naturalistic English text spanning fiction, news, academic, and
conversational genres. We access it via the
\texttt{liyucheng/VUA20} dataset (160,154 annotated tokens) and
extract 200 pairs in which the same word, matched by its
dictionary form (lemma) so that inflected variants are treated as
the same target, appears in both a metaphorically and a literally
annotated sentence. We restrict to 5--35-token sentences with the
target in a non-edge position, take one pair per lemma, and use a
fixed random seed for reproducibility. Unlike the controlled pairs,
the VUA set is dominated by conventionalised metaphors whose
figurative meaning is lexically entrenched, providing a stringent
test of whether CSE remains sensitive when the metaphor signal is
weaker.

As a specificity control, we additionally construct 10
literal--literal pairs sharing a target word but differing in
contextual complexity (e.g., \emph{The knife cut the bread} /
\emph{The surgeon cut through the multiple layers of tissue}).
These pairs contain no metaphor and test whether CSE elevation
reflects general semantic processing difficulty
(Section~\ref{sec:complexity}).

\paragraph{Evaluation quantities.}

The primary evaluation quantity is the layer-resolved conditional
scale entropy difference
\begin{equation}\label{eq:delta_h}
\Delta H_M(b) = H(\mathrm{scale} \mid b)^{\mathrm{met}}
              - H(\mathrm{scale} \mid b)^{\mathrm{lit}},
\end{equation}
computed for each pair and each layer position $b$. Positive values
indicate broader scale engagement for metaphorical processing.

Projection separation serves as a validation check on the learned
contrast direction $v^*$: for each pair $k$, a consistently positive
mean projected update difference across layers confirms that $v^*$
captures a stable metaphor--literal distinction. This quantity
validates the projection but is not itself a test of the Multi-Scale
Engagement Hypothesis.

Additional spectral quantities (CWT energy, $H_W$, $H(q)$) are summarised in Section~\ref{sec:cross_arch} as supporting analyses.

\paragraph{Statistical testing.}
For the controlled experiments ($K = 25$ pairs), we use Monte Carlo sign-flip tests (10{,}000 permutations) at each scalogram position; for the VUA experiments ($K = 200$), one-sample $t$-tests on paired differences with Cohen's $d = \bar{d}/s_d$.

To correct for multiple comparisons across scalogram positions, we use cluster-based permutation testing~\cite{maris2007nonparametric}. The cluster statistic is the sum of test statistics over the largest contiguous cluster exceeding the one-sided threshold ($\alpha = 0.05$), evaluated against 5{,}000 sign-flipped null datasets; the cluster $p$-value is the fraction of null maxima meeting or exceeding the observed value.

\subsection{Validation of the Contrast Direction}
\label{sec:validation}

Three checks confirm that $v^*$ captures a stable metaphor--literal distinction on every model tested.
(i)~\emph{Projection separation} is positive for at least 21/25 controlled pairs on every architecture (all $p < 0.001$).
(ii)~\emph{Leave-one-out generalisation} on GPT-2 Medium shows that a direction learned from any 24 pairs separates the held-out pair with the correct sign for all 25 holdouts.
(iii)~\emph{Stability under expansion}: $v^*$ estimated from the original 10 pairs and from all 25 pairs has cosine similarity 0.997, indicating that the direction reflects the model rather than any particular stimulus subset.

\subsection{Conditional Scale Entropy Across Architectures}
\label{sec:cross_arch}

\paragraph{Primary finding.}
Figure~\ref{fig:cse_main} shows the layer-resolved conditional
scale entropy difference $\Delta H = H^{\mathrm{met}} - H^{\mathrm{lit}}$ on GPT-2 Medium using the full 25-pair controlled set. The
metaphorical condition produces consistently higher conditional scale entropy at positions 5--13, with 13 of 25 scalogram positions reaching $p < 0.05$ under Monte Carlo sign-flip tests. The peak occurs at position 9 ($p = 0.0008$, 18/25 pairs positive).
Cluster-based permutation testing confirms the main contiguous cluster at positions 5--13 ($p_{\mathrm{cluster}} = 0.007$), accounting for multiple comparisons across all 25 scalogram positions.

\begin{figure}[t]
\centering
\includegraphics[width=0.85\linewidth]{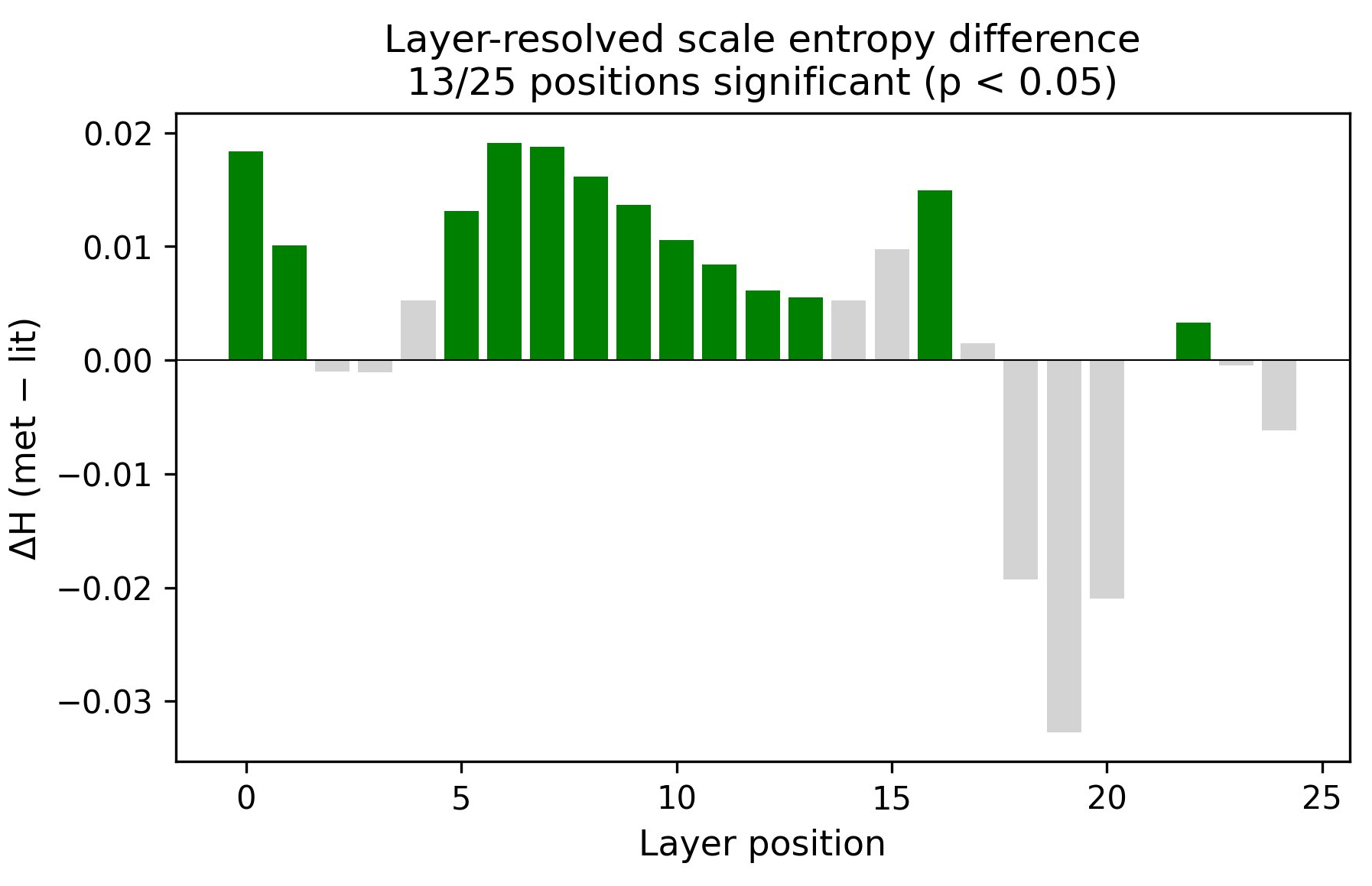}
\caption{Layer-resolved conditional scale entropy difference
$\Delta H = H^{\mathrm{met}} - H^{\mathrm{lit}}$ on GPT-2 Medium (25 controlled pairs). Green bars indicate positions reaching $p < 0.05$ under Monte Carlo sign-flip testing; the cluster at positions 5--13 survives cluster-based permutation correction ($p_{\mathrm{cluster}} = 0.007$). Absolute scale entropy spans $H \in [2.2, 3.1]$ across positions and is nearly identical between conditions, with the difference at the second decimal.}
\label{fig:cse_main}
\end{figure}

\paragraph{Cross-architecture consistency.}
Extending the analysis to four additional decoder-only architectures
with the same 25-pair set and identical wavelet parameters,
Table~\ref{tab:cse_cross} and Figure~\ref{fig:cse_cross} report the
per-architecture cluster statistics. On every model tested CSE
elevates significantly at a contiguous range of positions, and
every cluster survives cluster-based permutation correction at
$\alpha = 0.05$ ($p_{\mathrm{cluster}} < 0.001$ on four of five
models; $= 0.007$ on GPT-2 Medium). All active zones occupy the
early-to-mid relative-depth range, despite absolute depth varying
from 12 to 36 layers and architecture varying between dense
attention (GPT-2, LLaMA) and Mixture-of-Experts routing (GPT-oss).
R\'{e}nyi-2 conditional entropy on GPT-2 Medium produces identical
significant positions, ruling out sensitivity to
distributional tail behaviour.

\begin{table}[t]
\centering
\caption{Conditional scale entropy across architectures (25 controlled pairs,
cluster-based permutation correction, 5{,}000 permutations). Active zones
consistently occupy early-to-mid depth across all architectures.}
\label{tab:cse_cross}
\scriptsize
\setlength{\tabcolsep}{3pt}
\begin{tabular}{lcccccc}
\hline
\textbf{Model} & $K$ & \textbf{Entropy} & \textbf{Cluster (positions)}
& $p_{\mathrm{cluster}}$ & \textbf{Peak} $p$
& \textbf{Rel.\ depth} \\
\hline
GPT-2 Small  & 25 & Shannon   & 2--8   & $<$0.001 & $<$0.001 & 17--67\% \\
GPT-2 Medium & 25 & Shannon   & 5--13  & 0.007    & 0.0008   & 21--54\% \\
GPT-2 Large  & 25 & Shannon   & 0--12  & $<$0.001 & $<$0.001 & 0--33\% \\
LLaMA-2 7B   & 25 & Shannon   & 5--11  & $<$0.001 & $<$0.001 & 16--34\% \\
GPT-oss 20B  & 25 & Shannon   & 0--13  & $<$0.001 & $<$0.001 & 0--54\% \\
\hline
\end{tabular}
\end{table}

\begin{figure}[t]
\centering
\includegraphics[width=\textwidth]{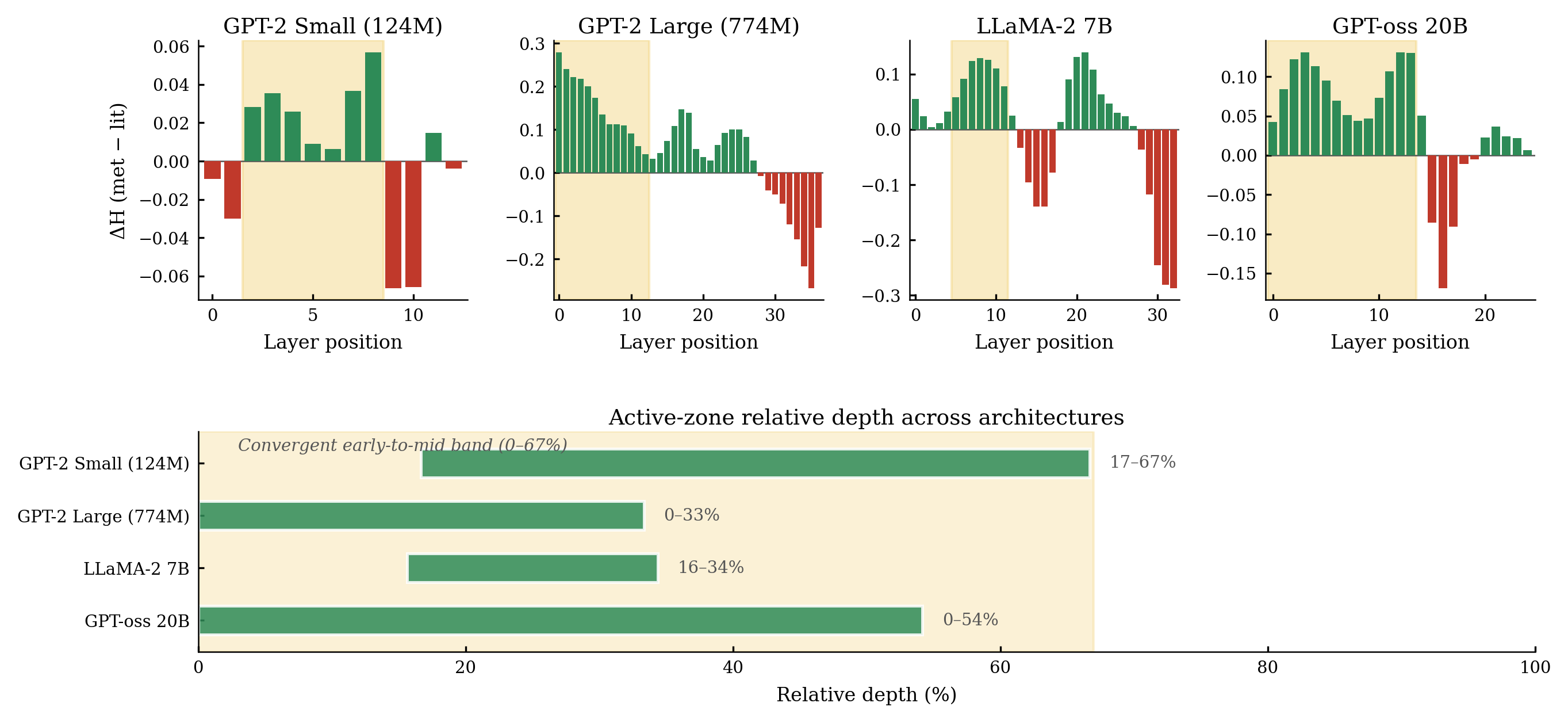}
\caption{Layer-resolved conditional scale entropy difference
$\Delta H = H^{\mathrm{met}} - H^{\mathrm{lit}}$ across four decoder-only
architectures (25 controlled pairs each; GPT-2 Medium shown separately
in Figure~\ref{fig:cse_main}). Green: $\Delta H > 0$; red: $\Delta H < 0$;
shaded region indicates the contiguous significant cluster
(cluster-based permutation test, 5{,}000 permutations, $\alpha = 0.05$).
Bottom panel: active-zone relative depth for each architecture. Despite
absolute depth varying from 12 to 36 layers, every cluster lies within
the early-to-mid relative-depth band (0--67\%).}
\label{fig:cse_cross}
\end{figure}

\paragraph{Robustness of CSE.}
Among the spectral quantities examined, only CSE generalises reliably. Energy suppression (CWT power, met~$<$~lit) is significant on GPT-2 Medium but absent on LLaMA-2 7B and GPT-oss 20B. More strikingly, update-energy entropy $H(q)$ reverses sign on the same model when the stimulus set is expanded from 10 to 25 pairs ($p = 0.003$, met~$<$~lit $\to$ $p = 0.002$, met~$>$~lit), a pattern mirrored across architectures; a quantity whose sign depends on stimulus composition cannot serve as a reliable signature. The attenuation of global metrics on LLaMA-2 7B parallels GPT-oss 20B, suggesting the effect is linked to model scale rather than the dense/MoE distinction. CSE is the only metric whose direction and significance are stable across pair expansion, architectures, and, as shown below, stimulus regimes.

\subsection{Generalisation to Naturally-Occurring Metaphor: the VUA Corpus}
\label{sec:vua}

The controlled experiments use carefully constructed stimuli; we now
test whether the conditional scale entropy generalises to naturalistic
metaphor drawn from an established benchmark. Stimulus construction
details are described in Section~\ref{sec:design}; all analyses use
GPT-2 Medium with identical wavelet parameters.

\paragraph{Projection separation.}
The contrast direction $v^*$ from the 25 controlled pairs yields
positive mean projected update differences for 107/200 VUA pairs
($p = 0.19$): a non-significant first-moment result, consistent
with $v^*$ not fully capturing predominantly conventionalised
metaphors. As noted in Section~\ref{sec:projection}, projection
separation tests average update magnitude along $v^*$; a null
mean shift does not preclude structural differences in the depth
profile.

\paragraph{CSE elevation and convergent validation.}
Two analyses are needed because $v^*$ from the controlled pairs may not perfectly align the depth profile for conventionalised VUA metaphors. With the controlled-pair $v^*$, the VUA cluster falls at positions 0--4 ($p_{\mathrm{cluster}} = 0.013$), shifted earlier than $\mathcal{A}_M = \{5, \ldots, 13\}$ (Figure~\ref{fig:vua_convergent}, left). Re-estimating $v^*$ from the VUA data alone (cosine similarity 0.964 with the controlled-pair direction) shifts the significant cluster to positions 5--10 ($p_{\mathrm{cluster}} = 0.003$), falling entirely within $\mathcal{A}_M$
(Figure~\ref{fig:vua_convergent}, right). Two independent datasets with independently estimated contrast directions thus converge on the same depth range, confirming that the active zone reflects a stable property of how GPT-2 Medium processes figurative language rather than an artifact of any one stimulus set.

\begin{figure}[t]
\centering
\includegraphics[width=\linewidth]{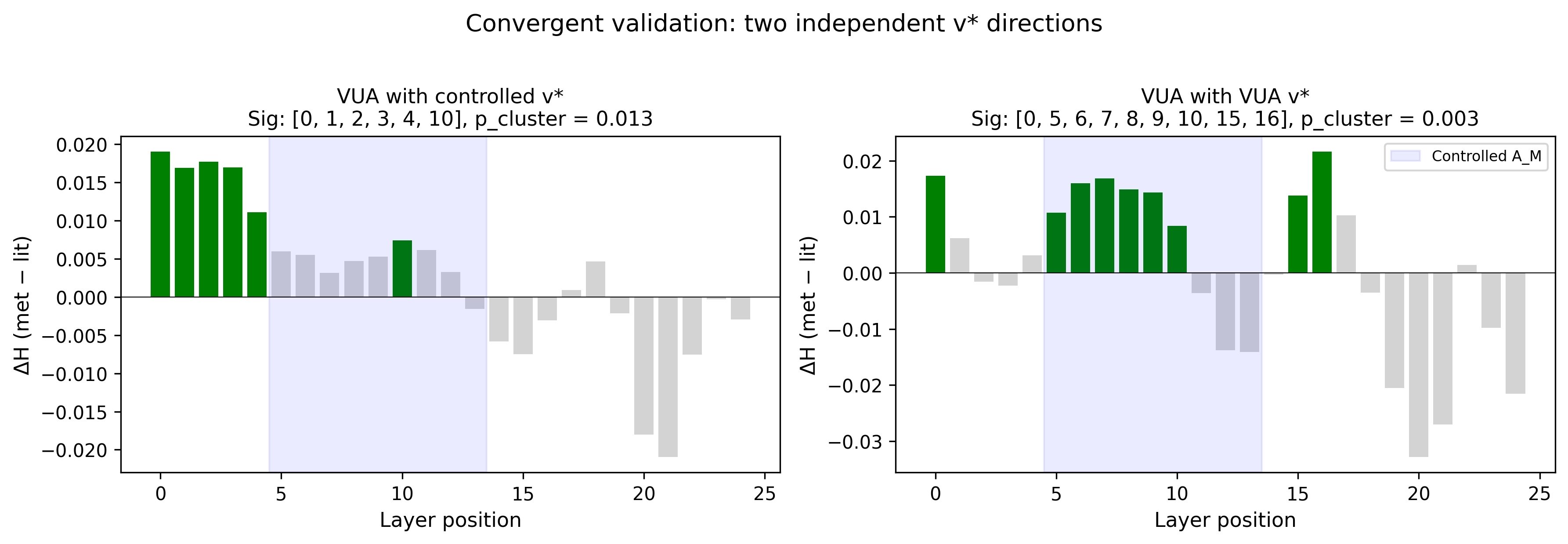}
\caption{Convergent validation of the VUA active zone on GPT-2
Medium (200 pairs). Left: VUA analysed with the controlled-pair
$v^*$ — the significant cluster falls at positions 0--4
($p_{\mathrm{cluster}} = 0.013$), shifted earlier than the
controlled active zone $\mathcal{A}_M = \{5, \ldots, 13\}$
(blue shading). Right: VUA analysed with $v^*$ independently
re-estimated from the VUA data — the significant cluster shifts
to positions 5--10 ($p_{\mathrm{cluster}} = 0.003$), falling
entirely within $\mathcal{A}_M$. Two independent datasets with
independently estimated contrast directions thus converge on the
same depth range. Green bars indicate positions reaching
$p < 0.05$.}
\label{fig:vua_convergent}
\end{figure}

\paragraph{Graded effect size.}
The effect size for VUA metaphors ($d \approx 0.17$ at the cluster
peak) is approximately half that of the controlled vivid metaphors
($d \approx 0.34$), consistent with the Graded Salience
Hypothesis~\cite{giora1997graded}: conventionalised metaphors, whose
figurative meaning is lexically entrenched, produce a weaker but
structurally identical CSE elevation compared to novel cross-domain
metaphors.

\paragraph{Robustness of the conditional scale entropy.}
On GPT-2 Medium across the two stimulus sets, projection
separation, CWT energy, $H_W$, and $H(q)$ all attenuate to
non-significance on the VUA data, while only the conditional
scale entropy generalises to naturalistic data, mirroring the
cross-architecture pattern of Section~\ref{sec:cross_arch}.

\subsection{Specificity Controls}
\label{sec:complexity}

Two controls test whether CSE elevation reflects the
metaphor--literal contrast specifically, rather than general
semantic processing difficulty or the particular content of
metaphorical sentences.

\paragraph{Semantic complexity (literal--literal).}
We construct 10 pairs of literal sentences sharing a target word
but differing in contextual complexity (e.g., \emph{The knife cut
the bread} / \emph{The surgeon cut through the multiple layers of
tissue}). Using the same contrast direction $v^*$ and identical
wavelet parameters, no scalogram position reaches significance
under sign-flip testing. Mean $\Delta H$ within the active zone
$\mathcal{A}_M = \{5, \ldots, 13\}$ is $-0.0005$ for the
complexity contrast versus $0.017$ for the metaphor contrast.
CSE is therefore specific to the metaphor--literal distinction
and does not respond to variation in semantic complexity within
literal language.

\paragraph{Expression mode (matched-meaning triples).}
To separate figurativeness from propositional content, we
construct 10 minimal triples in which two distinct metaphorical
sentences and one literal sentence express the same meaning in
identical syntactic frames (e.g., \emph{She is a lightning bolt} /
\emph{She is a speeding bullet} / \emph{She is very fast}). Pooling
both metaphors against the literal, CSE elevates significantly at
positions 4--16 ($p_{\text{cluster}} = 0.003$), overlapping
$\mathcal{A}_M$. Each metaphor individually produces nearly
identical CSE profiles against the literal (14/25 significant
positions each), while the two metaphors compared against each
other yield zero significant positions ($|\Delta H| < 0.001$),
confirming that figurative mode of expression, not propositional
content, drives CSE elevation.

\section{Discussion}
\label{sec:discussion}

\paragraph{Organisation, not intensity.}
The invariance property of Theorem~\ref{thm:structure}(iv) is
central: CSE isolates the organisation of computation across
scales from its intensity. Magnitude-dependent metrics
(CWT energy, update-energy concentration) produce
significant results on GPT-2 Medium but do not survive changes of
architecture, stimulus regime, or even stimulus expansion within a
single model: $H(q)$ reverses direction (met~$<$~lit to
met~$>$~lit) on GPT-2 Medium when the controlled set is expanded
from 10 to 25 pairs (see Section~\ref{sec:cross_arch}). CSE survives
all three tests. Metaphorical processing is therefore distinguished
not by greater effort at any single depth but by broader
coordination of representational updates across frequency scales.

\paragraph{Cross-depth structure as a complementary lens.}
Current mechanistic interpretability methods (activation patching,
sparse autoencoders, circuit tracing~\cite{elhage2021mathematical})
operate at the level of individual components: they address what
specific neurons or attention heads do, but not how computation is
organised across the depth of the residual stream. Sun et
al.~\cite{sun2025painters} showed that middle layers can be skipped
or reordered with limited performance loss; our results add nuance:
when the model encounters semantically complex input, the \emph{pattern}
of updates across layers changes even when individual layers appear
interchangeable on average. CSE captures this input-dependent
depth structure as a macro-level diagnostic complementary to
component-level analysis.

\paragraph{Convergent validation and graded effect.}
When $v^*$ is re-estimated by applying Equation~\eqref{eq:vstar} to the 200 VUA pairs (rather than the 25 controlled pairs), the resulting active zone (positions 5--10) falls within the controlled-pair zone (positions 5--13). The graded effect size
($d \approx 0.34$ for vivid cross-domain metaphors,
$d \approx 0.17$ for conventionalised VUA metaphors) is
consistent with the Graded Salience
Hypothesis~\cite{giora1997graded}: metaphors whose figurative
meaning is not yet lexically entrenched demand more extensive
multi-scale coordination. 

\paragraph{Architectural implications.}
The active zone consistently occupies the early-to-mid relative-depth
range on all five architectures, despite threefold variation in
absolute depth and a change of attention mechanism between dense and
Mixture-of-Experts. Situating this within the layer-uniformity
findings of Sun et al.~\cite{sun2025painters}, our results locate,
within the middle-layer plateau, the phase where cross-domain semantic
content is integrated --- in its early half, before the late-layer
prediction phase described by Geva et al.~\cite{geva2022ffn}. This
architecture-independence has a practical implication: interventions
targeting figurative-language processing are most likely to take
effect in the early-to-mid residual stream regardless of model.

\paragraph{Scope and future directions.}
The current experiments cover 25 controlled pairs, 200 naturalistic
VUA pairs, and five architectures up to 20B parameters; specificity
controls each use 10 pairs. Larger stimulus sets, frontier-scale
models, and a stimulus-independent contrast direction would
strengthen generalisability. Three extensions are of particular
interest: (i)~stimuli with continuously rated novelty, enabling a
finer-grained test of the graded prediction;
(ii)~causal intervention via scale-channel suppression using the
wavelet response operator (Equation~\eqref{eq:operator}), to
establish whether multi-scale coordination is functionally necessary
or merely correlated with figurative processing; and
(iii)~application to polysemy, irony, metonymy, and multi-step
reasoning~\cite{sravanthi2024pub}, to determine whether the
signature is specific to metaphor or reflects a broader
computational property of semantically complex input.

\section{Conclusion}
\label{sec:conclusion}

We introduced the conditional scale entropy (CSE), a wavelet-derived
measure of how broadly transformer computation engages across
frequency scales at each layer position. Two theorems establish that
CSE is invariant to update magnitude and ordered by majorisation of
the normalised scale distribution. Across five decoder-only
architectures from 124M to 20B parameters, metaphorical tokens
elevate CSE at a contiguous range of layer positions on every model,
with all clusters surviving cluster-based permutation correction and
falling within the early-to-mid relative-depth band. Convergent
validation on 200 naturalistic VUA pairs recovers the same active
zone and a graded effect-size pattern consistent with the Graded
Salience Hypothesis. Specificity controls confirm that the elevation
reflects figurative mode of expression rather than semantic
difficulty or propositional content. Among all spectral quantities
examined, CSE is uniquely stable across architectures, stimulus
expansion, and stimulus regimes, identifying multi-scale
coordination as a robust computational signature of metaphorical
language processing and positioning CSE as a principled analytical
tool for characterising how transformers organise computation across
depth.


%
%
\bibliographystyle{splncs04}
\bibliography{references}

@inproceedings{gao2018neural,
  title={Neural Metaphor Detection in Context},
  author={Gao, Ge and Choi, Eunsol and Choi, Yejin and Zettlemoyer, Luke},
  booktitle={Proceedings of EMNLP},
  pages={607--613},
  year={2018}
}

@inproceedings{mao2019end,
  title={End-to-End Sequential Metaphor Identification Inspired by Linguistic Theories},
  author={Mao, Rui and Lin, Chenghua and Guerin, Frank},
  booktitle={Proceedings of ACL},
  pages={3888--3898},
  year={2019}
}

@inproceedings{choi2021melbert,
  title={{MelBERT}: Metaphor Detection via Contextualized Late Interaction using Metaphorical Identification Theories},
  author={Choi, Minjin and Lee, Sunkyung and Choi, Eunseong and Park, Heesoo and Lee, Junhyuk and Lee, Dongwon and Lee, Jongwuk},
  booktitle={Proceedings of NAACL},
  pages={1763--1773},
  year={2021}
}

@book{steen2010method,
  title={A Method for Linguistic Metaphor Identification},
  author={Steen, Gerard J and Dorst, Aletta G and Herrmann, J Berenike and Kaal, Anna A and Krennmayr, Tina and Pasma, Trijntje},
  publisher={John Benjamins},
  year={2010}
}

@inproceedings{wachowiak2023does,
  title={Does {GPT-3} Grasp Metaphors? Identifying Metaphor Mappings with Generative Language Models},
  author={Wachowiak, Lennart and Gromann, Dagmar},
  booktitle={Proceedings of ACL (Volume 1:
    Long Papers).},
  pages={1018--1032},
  year={2023}
}

@inproceedings{tenney2019bert,
  title={{BERT} Rediscovers the Classical {NLP} Pipeline},
  author={Tenney, Ian and Das, Dipanjan and Pavlick, Ellie},
  booktitle={Proceedings of ACL},
  pages={4593--4601},
  year={2019}
}

@inproceedings{ethayarajh2019contextual,
  title={How Contextual are Contextualized Word Representations? Comparing the Geometry of {BERT}, {ELMo}, and {GPT-2} Embeddings},
  author={Ethayarajh, Kawin},
  booktitle={Proceedings of EMNLP},
  pages={55--65},
  year={2019}
}

@article{elhage2021mathematical,
  title={A Mathematical Framework for Transformer Circuits},
  author={Elhage, Nelson and Nanda, Neel and Olsson, Catherine and Henighan, Tom and Joseph, Nicholas and Mann, Ben and Askell, Amanda and Bai, Yuntao and Chen, Anna and Conerly, Tom and others},
  journal={Transformer Circuits Thread},
  year={2021}
}

@inproceedings{aghazadeh2022metaphors,
  title={Metaphors in Pre-Trained Language Models: Probing and Generalization Across Datasets and Languages},
  author={Aghazadeh, Ehsan and Fayyaz, Mohsen and Yaghoobzadeh, Yadollah},
  booktitle={Proceedings of ACL},
  pages={2037--2050},
  year={2022}
}

@inproceedings{voita2020information,
  title={Information-Theoretic Probing with Minimum Description Length},
  author={Voita, Elena and Titov, Ivan},
  booktitle={Proceedings of EMNLP},
  pages={183--196},
  year={2020}
}

@inproceedings{rahaman2019spectral,
  title={On the Spectral Bias of Neural Networks},
  author={Rahaman, Nasim and Barber, Aristide and Arpit, Devansh and Draxler, Felix and Lin, Min and Hamprecht, Fred and Bengio, Yoshua and Courville, Aaron},
  booktitle={Proceedings of ICML},
  pages={5301--5310},
  year={2019}
}

@article{shwartzziv2017opening,
  title={Opening the Black Box of Deep Neural Networks via Information},
  author={Shwartz-Ziv, Ravid and Tishby, Naftali},
  journal={arXiv preprint arXiv:1703.00810},
  year={2017}
}

@inproceedings{abraham2025wavelet,
  title={Wavelet-Based Mechanistic Interpretability of Vision Transformers},
  author={Abraham, Joshua and others},
  booktitle={CVPR Workshop on Mechanistic Interpretability in Vision},
  year={2025}
}

@article{zhai2023stabilizing,
  title={Stabilizing Transformer Training by Preventing Attention Entropy Collapse},
  author={Zhai, Shuangfei and Likhomanenko, Tatiana and Littwin, Etai and Busbridge, Dan and Ramapuram, Jason and Zhang, Yizhe and Gu, Jiatao and Susskind, Josh},
  journal={arXiv preprint arXiv:2303.08296},
  year={2023}
}

@article{geshkovski2025mathematical,
  title={A Mathematical Perspective on Transformers},
  author={Geshkovski, Borjan and Letrouit, Cyril and Polyanskiy, Yury and Rigollet, Philippe},
  journal={Bulletin of the American Mathematical Society},
  volume={62},
  number={3},
  year={2025}
}

@article{giora1997graded,
  title={Understanding Figurative and Literal Language: The Graded Salience Hypothesis},
  author={Giora, Rachel},
  journal={Cognitive Linguistics},
  volume={8},
  number={3},
  pages={183--206},
  year={1997}
}

@article{bowdle2005career,
  title={The Career of Metaphor},
  author={Bowdle, Brian F and Gentner, Dedre},
  journal={Psychological Review},
  volume={112},
  number={1},
  pages={193--216},
  year={2005}
}

@inproceedings{geva2022ffn,
  title={Transformer Feed-Forward Layers Build Predictions by Promoting Concepts in the Vocabulary Space},
  author={Geva, Mor and Caciularu, Avi and Wang, Kevin and Goldberg, Yoav},
  booktitle={Proceedings of the 2022 Conference on Empirical Methods in Natural Language Processing},
  pages={30--45},
  year={2022}
}

@article{Torrence1998APG,
  title={A Practical Guide to Wavelet Analysis.},
  author={Christopher Torrence and Gilbert P. Compo},
  journal={Bulletin of the American Meteorological Society},
  year={1998},
  volume={79},
  pages={61-78},
  url={https://api.semanticscholar.org/CorpusID:14928780}
}

@book{mallat2009wavelet,
  author    = {Mallat, St\'{e}phane},
  title     = {A Wavelet Tour of Signal Processing: The Sparse Way},
  edition   = {3rd},
  year      = {2009},
  publisher = {Academic Press},
  isbn      = {978-0-12-374370-1}
}

@book{cover2006elements,
  author    = {Cover, Thomas M. and Thomas, Joy A.},
  title     = {Elements of Information Theory},
  edition   = {2nd},
  year      = {2006},
  publisher = {Wiley-Interscience}
}

@book{marshall2011inequalities,
  author    = {Marshall, Albert W. and Olkin, Ingram and
               Arnold, Barry C.},
  title     = {Inequalities: Theory of Majorization and
               Its Applications},
  edition   = {2nd},
  year      = {2011},
  publisher = {Springer}
}

@inproceedings{sun2025painters,
  author    = {Sun, Qi and Pickett, Marc and Nain, Aakash Kumar
               and Jones, Llion},
  title     = {Transformer Layers as Painters},
  booktitle = {Proceedings of the AAAI Conference on Artificial
               Intelligence},
  volume    = {39},
  pages     = {25219--25227},
  year      = {2025},
}

@inproceedings{sravanthi2024pub,
  author    = {Sravanthi, Pragnya and Mamidi, Radhika},
  title     = {{PUB}: A Pragmatics Understanding Benchmark for
               Assessing {LLMs}' Pragmatics Capabilities},
  booktitle = {Proceedings of the 2024 Joint International
               Conference on Computational Linguistics, Language
               Resources and Evaluation (LREC-COLING)},
  year      = {2024},
}

@article{maris2007nonparametric,
  author  = {Maris, Eric and Oostenveld, Robert},
  title   = {Nonparametric statistical testing of {EEG}-
             and {MEG}-data},
  journal = {Journal of Neuroscience Methods},
  volume  = {164},
  number  = {1},
  pages   = {177--190},
  year    = {2007},
  doi     = {10.1016/j.jneumeth.2007.03.024},
}

@inproceedings{vaswani2017attention,
  author    = {Vaswani, Ashish and Shazeer, Noam and Parmar, Niki
               and Uszkoreit, Jakob and Jones, Llion and Gomez,
               Aidan N. and Kaiser, {\L}ukasz and Polosukhin,
               Illia},
  title     = {Attention is All You Need},
  booktitle = {Advances in Neural Information Processing Systems},
  volume    = {30},
  year      = {2017},
}

@techreport{radford2019language,
  title       = {Language Models are Unsupervised Multitask Learners},
  author      = {Radford, Alec and Wu, Jeffrey and Child, Rewon and
                 Luan, David and Amodei, Dario and Sutskever, Ilya},
  institution = {OpenAI},
  year        = {2019},
  url         = {https://cdn.openai.com/better-language-models/language_models_are_unsupervised_multitask_learners.pdf}
}

@article{touvron2023llama2,
  title         = {Llama 2: Open Foundation and Fine-Tuned Chat Models},
  author        = {Touvron, Hugo and Martin, Louis and Stone, Kevin and
                   Albert, Peter and Almahairi, Amjad and Babaei, Yasmine and
                   Bashlykov, Nikolay and Batra, Soumya and Bhargava, Prajjwal and
                   Bhosale, Shruti and others},
  journal       = {arXiv preprint arXiv:2307.09288},
  year          = {2023},
  doi           = {10.48550/arXiv.2307.09288}
}

@misc{openai2025gptoss,
  title         = {{gpt-oss-120b} \& {gpt-oss-20b} Model Card},
  author        = {{OpenAI}},
  year          = {2025},
  eprint        = {2508.10925},
  archivePrefix = {arXiv},
  primaryClass  = {cs.CL},
  url           = {https://arxiv.org/abs/2508.10925}
}

\end{document}